\title{Layer Sparsity in Neural Networks}
\author{Mohamed Hebiri \\
  Department of Mathematics\\
  Universit\'e de Marne-la-Vall\'ee\\
77454 Marne-la-Vall\'ee, France\\
\texttt{mohamed.hebiri@u-pem.fr}\\
\And
  Johannes~Lederer\thanks{www.johanneslederer.com}\\
  Department of Mathematics\\
  Ruhr-University Bochum\\
  44801 Bochum, Germany\\
  \texttt{johannes.lederer@rub.de} \\
}
\newtheorem{theorem}{Theorem}
\newtheorem{example}{Example}
\tikzset{
  included node/.style={circle, draw=black!100, thick, on grid, minimum width=0.5cm}, 
  hidden node/.style={circle, draw=black!30, thick, on grid, minimum width=0.5cm},
  included connection/.style={->, thick, draw=black!100},
  hidden connection/.style={->, thick, draw=black!30, draw opacity=0},
  fused connection/.style={->, thick, draw=black!100},
  fidden connection/.style={->, thick, draw=black!30, draw opacity=0},
  node title/.style={above=0.8cm of five-one, font=\bfseries},
  general/.style={node distance=1cm and 0.7cm}
}
\newcommand{\plotdistance}{~~~}
\newcommand{\coloring}[1]{\textcolor{red}{#1}}
\renewcommand{\coloring}[1]{#1}
\newcommand{\methodLS}{\ensuremath{\operatorname{LS}}}
\newcommand{\methodSLS}{\ensuremath{\operatorname{SLS}}}
\newcommand{\methodFLS}{\ensuremath{\operatorname{FLS}}}
\newcommand{\methodOLS}{\ensuremath{\operatorname{ILS}}}
\newcommand{\methodSLSB}{\textbf{SLS}}
\newcommand{\methodFLSB}{\textbf{FLS}}
\newcommand{\layernbr}{\coloring{\ensuremath{l}}}
\newcommand{\samplenbr}{\coloring{\ensuremath{n}}}
\newcommand{\parameternbr}{\coloring{\ensuremath{p}}}
\newcommand{\inputnbr}{\coloring{\ensuremath{d}}}
\newcommand{\tuningCS}{\coloring{\ensuremath{r^{\operatorname{C}}}}}
\newcommand{\tuningC}{\coloring{\ensuremath{\boldsymbol{r}^{\operatorname{C}}}}}
\newcommand{\tuningNS}{\coloring{\ensuremath{r^{\operatorname{N}}}}}
\newcommand{\tuningN}{\coloring{\ensuremath{\boldsymbol{r}^{\operatorname{N}}}}}
\newcommand{\tuningLS}{\coloring{\ensuremath{r^{\operatorname{L}}}}}
\newcommand{\tuningL}{\coloring{\ensuremath{\boldsymbol{r}^{\operatorname{L}}}}}
\newcommand{\truth}{\coloring{\ensuremath{\boldsymbol{W}}}}
\newcommand{\truths}{\coloring{\ensuremath{W}}}
\newcommand{\datafit}{\coloring{\ensuremath{\operatorname{DataFit}}}}
\newcommand{\R}{\coloring{\ensuremath{\mathbb{R}}}}
\newcommand{\parameterspace}{\coloring{\ensuremath{\mathcal{V}}}}
\newcommand{\parameterspaceMerged}{\coloring{\ensuremath{\mathcal{V}_{\sparseset}}}}
\newcommand{\estimator}{\coloring{\ensuremath{\widehat{\truth}}}}
\newcommand{\estimators}{\coloring{\ensuremath{\widehat{\truths}}}}
\newcommand{\estimatorMerged}{\coloring{\ensuremath{\widehat{\truth}_{\sparseset}}}}
\newcommand{\estimatorMergeds}{\coloring{\ensuremath{\widehat{\truths}_{\sparseset}}}}
\newcommand{\weights}{\coloring{\ensuremath{V}}}
\newcommand{\weight}{\coloring{\ensuremath{\boldsymbol{V}}}}
\newcommand{\weightsReLU}{\coloring{\ensuremath{N}}}
\newcommand{\weightReLU}{\coloring{\ensuremath{\boldsymbol{N}}}}
\newcommand{\weightsMerged}{\coloring{\ensuremath{M}}}
\newcommand{\weightMerged}{\coloring{\ensuremath{\boldsymbol{M}}}}
\newcommand{\invector}{\coloring{\ensuremath{\boldsymbol{x}}}}
\newcommand{\outvectors}{\coloring{\ensuremath{y}}}
\newcommand{\noises}{\coloring{\ensuremath{u}}}
\newcommand{\actfunction}{\coloring{\ensuremath{\boldsymbol{f}}}}
\newcommand{\regfunction}{\coloring{\ensuremath{f}}}
\newcommand{\regularizer}{\coloring{\ensuremath{h}}}
\newcommand{\regularizerC}{\coloring{\ensuremath{\regularizer^{\operatorname{C}}}}}
\newcommand{\regularizerN}{\coloring{\ensuremath{\regularizer^{\operatorname{N}}}}}
\newcommand{\regularizerL}{\coloring{\ensuremath{\regularizer^{\operatorname{L}}}}}
\newcommand{\regularizerLj}{\coloring{\ensuremath{\regularizer^{\operatorname{L},j}}}}
\newcommand{\norm}[1]{|\!|#1|\!|}
\newcommand{\normtwo}[1]{\norm{#1}_2}
\newcommand{\mnorm}[1]{|\!|\!|#1|\!|\!|}
\newcommand{\mnormone}[1]{\mnorm{#1}_1}
\newcommand{\mnormonetwo}[1]{\mnorm{#1}_{2,1}}
\newcommand{\mnormplustwo}[1]{\mnorm{#1}_{2,+}}
\newcommand{\mnormn}[1]{\mnorm{#1}_n}
\newcommand{\inputM}{\coloring{\ensuremath{\boldsymbol{c}}}}
\newcommand{\inputMS}{\coloring{\ensuremath{c}}}
\newcommand{\inputMM}{\coloring{\ensuremath{\boldsymbol{t}}}}
\newcommand{\inputMMS}{\coloring{\ensuremath{t}}}
\newcommand{\inputMMM}{\coloring{\ensuremath{\boldsymbol{z}}}}
\newcommand{\boundbound}{\coloring{\ensuremath{A}}}
\newcommand{\boundtruth}{\coloring{\ensuremath{B}}}
\newcommand{\boundGauss}{\coloring{\ensuremath{G}}}
\newcommand{\parametertotal}{\coloring{\ensuremath{P}}}
\newcommand{\constT}{\coloring{\ensuremath{a_1}}}
\newcommand{\constTT}{\coloring{\ensuremath{a_2}}}
\newcommand{\constTTT}{\coloring{\ensuremath{a_3}}}
\newcommand{\sparseset}{\coloring{\ensuremath{\mathcal S}}}
\newcommand{\sparsesize}{\coloring{\ensuremath{s}}}
\newcommand{\sparsesizeW}{\coloring{\ensuremath{s_W}}}
\newcommand{\sparsevector}{\coloring{\ensuremath{\boldsymbol s}}}
\newcommand{\sparsevectors}{\coloring{\ensuremath{s}}}
\newcommand{\testset}{\ensuremath{\mathcal{T}}}
\newcommand{\MSE}{\ensuremath{\widehat{\operatorname{mse}}}}
\newcommand{\LayerSpars}{\ensuremath{\hat{\operatorname{s}}}}
\newcommand{\sparsityW}{{\rm s_W}}
\begin{document}

\maketitle

\begin{abstract}
Sparsity has become popular in machine learning,
because it can save computational resources, facilitate interpretations, and prevent overfitting.
In this paper, we discuss sparsity in the framework of neural networks.
In particular, we formulate a new notion of sparsity that concerns the networks' layers and, therefore,
aligns particularly well with the current trend toward deep networks.
We call this notion layer sparsity.
We then introduce  corresponding regularization and refitting schemes
that can complement standard deep-learning pipelines to generate more compact and accurate networks.
\end{abstract}

\section{Introduction}
\label{Introduction}
The number of layers and the number of nodes in each layer are arguably among the most fundamental parameters of neural networks.
But specifying these parameters can be challenging:
deep and wide networks, that is, networks with many layers and nodes,
 can describe data in astounding detail,
but they are also prone to overfitting and require large memory, CPU, energy, and so forth.
 The resource requirements can be particularly problematic for real-time applications or applications on  fitness trackers and other wearables,
whose popularity has surged in recent years.
A promising approach to meet these challenges is to fit  networks sizes adaptively,
that is,
to allow for many layers and nodes in principle,
but to ensure that the final network is ``simple'' in that it has a small number of connections, nodes, or layers~\cite{Changpinyo17,Han15,Kim16,Liu15,Wen16}.

Popular ways to fit such simple and compact networks include successively augmenting small networks~\cite{Ash89, Bello92},
pruning large networks~\cite{Simonyan14},
or explicit sparsity-inducing regularization of the weight matrices,
which we focus on here.
An example is the $\ell_1$-norm,
which can reduce the  number of connections.
Another example is the $\ell_1$-norm grouped over the rows of the weight matrices,
which can reduce the number of nodes.
It has been shown that such  regularizers can indeed produce networks that are both accurate and yet have a small number of nodes and connections either in the first layer \cite{Feng17} or overall \cite{Alvarez16,Liu15,Scardapane17}.
Such sparsity-inducing regularizers also have a long-standing tradition and  thorough theoretical underpinning in  statistics~\cite{Hastie15}.

But while sparsity on the level of connections and nodes has been studied in some detail,
sparsity on the level of layers is much less understood.
This lack of understanding contrasts the current trend to deep network architectures,
which is supported by state-of-the-art performances of deep networks~\cite{LeCun15,Schmidhuber15}, recent approximation theory for ReLU activation networks~\cite{Liang16,Telgarsky16,Yarotsky17}, and  recent statistical theory~\cite{Golowich17,Kohler19,Taheri20}.
Hence,
a better understanding of sparsity  on the level of layers seems to be in order.

Therefore,
we discuss in this paper sparsity with a special emphasis on the networks' layers.
Our key observation is that for typical activation functions such as ReLU,
a layer can be removed if all its parameter values are non-negative
We leverage this observation in the development of a new regularizer that specifically targets sparsity on the level of layers,
and we show that this regularizer can lead to more compact and more accurate networks.

Our three main contributions are:
\begin{enumerate}
    \item We introduce a new notion of sparsity that we call \emph{layer sparsity}.
\item We introduce a corresponding regularizer that can reduce network sizes.
    \item We introduce an additional refitting step  that can  further improve prediction accuracies.
\end{enumerate}

In Section~\ref{Sparsity}, we specify our framework, discuss different notions of sparsity, and introduce our refitting scheme.
In Section~\ref{Numerics},
we establish a numerical proof of concept.
In Section~\ref{Proofs}, we establish the proof for our theoretical result.
In Section~\ref{Discussion}, we conclude with a discussion.

\section{Sparsity in Neural Networks}
\label{Sparsity}

\newcommand{\figurenetworks}[1]{
\begin{figure}[#1]
\centering
\begin{tikzpicture}[general]  
  \node[included node](one-one){};
  \node[included node](one-two)[right=of one-one]{};
  \node[included node](one-three)[right=of one-two]{};
  
  \node[included node](two-one) [above=of one-one]{};
  \node[included node](two-two)[right=of two-one]{};
  \node[included node](two-three)[right=of two-two]{};
  
  \node[included node](three-one) [above=of two-one]{};
  \node[included node](three-two)[right=of three-one]{};
  \node[included node](three-three)[right=of three-two]{};
  
  \node[included node](four-one) [above=of three-one]{};
  \node[included node](four-two)[right=of four-one]{};
  \node[included node](four-three)[right=of four-two]{};

  \node[included node](five-one)[above=of four-two]{};
  
  \draw[included connection] (one-one)--(two-one);
  \draw[included connection] (one-one)--(two-two);
  \draw[included connection] (one-one)--(two-three);
  
  \draw[included connection] (one-two)--(two-one);
  \draw[included connection] (one-two)--(two-two);
  \draw[included connection] (one-two)--(two-three);
  
  \draw[included connection] (one-three)--(two-one);
  \draw[included connection] (one-three)--(two-two);
  \draw[included connection] (one-three)--(two-three);
  
  \draw[included connection] (two-one)--(three-one);
  \draw[included connection] (two-one)--(three-two);
  \draw[included connection] (two-one)--(three-three);
  
  \draw[included connection] (two-two)--(three-one);
  \draw[included connection] (two-two)--(three-two);
  \draw[included connection] (two-two)--(three-three);
  
  \draw[included connection] (two-three)--(three-one);
  \draw[included connection] (two-three)--(three-two);
  \draw[included connection] (two-three)--(three-three);
  
  \draw[included connection] (three-one)--(four-one);
  \draw[included connection] (three-one)--(four-two);
  \draw[included connection] (three-one)--(four-three);
  
  \draw[included connection] (three-two)--(four-one);
  \draw[included connection] (three-two)--(four-two);
  \draw[included connection] (three-two)--(four-three);
  
  \draw[included connection] (three-three)--(four-one);
  \draw[included connection] (three-three)--(four-two);
  \draw[included connection] (three-three)--(four-three);
  
  \draw[included connection] (four-one)--(five-one);
  \draw[included connection] (four-two)--(five-one);
  \draw[included connection] (four-three)--(five-one);
  
  \node[node title] {no sparsity};
  \node[below=0.1cm of one-two] {input};
  \node[above=0.1cm of five-one] {output};
\end{tikzpicture}\plotdistance
\begin{tikzpicture}[general]  
  \node[included node](one-one){};
  \node[included node](one-two)[right=of one-one]{};
  \node[included node](one-three)[right=of one-two]{};
  
  \node[included node](two-one) [above=of one-one]{};
  \node[included node](two-two)[right=of two-one]{};
  \node[included node](two-three)[right=of two-two]{};
  
  \node[included node](three-one) [above=of two-one]{};
  \node[included node](three-two)[right=of three-one]{};
  \node[included node](three-three)[right=of three-two]{};
  
  \node[included node](four-one) [above=of three-one]{};
  \node[included node](four-two)[right=of four-one]{};
  \node[included node](four-three)[right=of four-two]{};

  \node[included node](five-one)[above=of four-two]{};
  
  \draw[included connection] (one-one)--(two-one);
  \draw[included connection] (one-one)--(two-two);
  \draw[hidden connection] (one-one)--(two-three);
  
  \draw[hidden connection] (one-two)--(two-one);
  \draw[included connection] (one-two)--(two-two);
  \draw[hidden connection] (one-two)--(two-three);
  
  \draw[hidden connection] (one-three)--(two-one);
  \draw[included connection] (one-three)--(two-two);
  \draw[included connection] (one-three)--(two-three);
  
  \draw[included connection] (two-one)--(three-one);
  \draw[hidden connection] (two-one)--(three-two);
  \draw[hidden connection] (two-one)--(three-three);
  
  \draw[hidden connection] (two-two)--(three-one);
  \draw[included connection] (two-two)--(three-two);
  \draw[included connection] (two-two)--(three-three);
  
  \draw[hidden connection] (two-three)--(three-one);
  \draw[hidden connection] (two-three)--(three-two);
  \draw[included connection] (two-three)--(three-three);
  
  \draw[included connection] (three-one)--(four-one);
  \draw[hidden connection] (three-one)--(four-two);
  \draw[hidden connection] (three-one)--(four-three);
  
  \draw[hidden connection] (three-two)--(four-one);
  \draw[included connection] (three-two)--(four-two);
  \draw[hidden connection] (three-two)--(four-three);
  
  \draw[included connection] (three-three)--(four-one);
  \draw[included connection] (three-three)--(four-two);
  \draw[included connection] (three-three)--(four-three);
  
  \draw[included connection] (four-one)--(five-one);
  \draw[included connection] (four-two)--(five-one);
  \draw[included connection] (four-three)--(five-one);
  
  \node[node title] {connection sparsity};
  \node[below=0.1cm of one-two] {input};
  \node[above=0.1cm of five-one] {output};
\end{tikzpicture}\plotdistance
\begin{tikzpicture}[general]  
  \node[included node](one-one){};
  \node[included node](one-two)[right=of one-one]{};
  \node[included node](one-three)[right=of one-two]{};
  
  \node[included node](two-one) [above=of one-one]{};
  \node[included node](two-two)[right=of two-one]{};
  \node[hidden node](two-three)[right=of two-two]{};
  
  \node[included node](three-one) [above=of two-one]{};
  \node[included node](three-two)[right=of three-one]{};
  \node[included node](three-three)[right=of three-two]{};
  
  \node[included node](four-one) [above=of three-one]{};
  \node[hidden node](four-two)[right=of four-one]{};
  \node[hidden node](four-three)[right=of four-two]{};

  \node[included node](five-one)[above=of four-two]{};
  
  \draw[included connection] (one-one)--(two-one);
  \draw[included connection] (one-one)--(two-two);
  \draw[hidden connection] (one-one)--(two-three);
  
  \draw[included connection] (one-two)--(two-one);
  \draw[included connection] (one-two)--(two-two);
  \draw[hidden connection] (one-two)--(two-three);
  
  \draw[included connection] (one-three)--(two-one);
  \draw[included connection] (one-three)--(two-two);
  \draw[hidden connection] (one-three)--(two-three);
  
  \draw[included connection] (two-one)--(three-one);
  \draw[included connection] (two-one)--(three-two);
  \draw[included connection] (two-one)--(three-three);
  
  \draw[included connection] (two-two)--(three-one);
  \draw[included connection] (two-two)--(three-two);
  \draw[included connection] (two-two)--(three-three);
  
  \draw[hidden connection] (two-three)--(three-one);
  \draw[hidden connection] (two-three)--(three-two);
  \draw[hidden connection] (two-three)--(three-three);
  
  \draw[included connection] (three-one)--(four-one);
  \draw[hidden connection] (three-one)--(four-two);
  \draw[hidden connection] (three-one)--(four-three);
  
  \draw[included connection] (three-two)--(four-one);
  \draw[hidden connection] (three-two)--(four-two);
  \draw[hidden connection] (three-two)--(four-three);
  
  \draw[included connection] (three-three)--(four-one);
  \draw[hidden connection] (three-three)--(four-two);
  \draw[hidden connection] (three-three)--(four-three);
  
  \draw[included connection] (four-one)--(five-one);
  \draw[hidden connection] (four-two)--(five-one);
  \draw[hidden connection] (four-three)--(five-one);
  
  \node[node title] {node sparsity};
  \node[below=0.1cm of one-two] {input};
  \node[above=0.1cm of five-one] {output};
\end{tikzpicture}\plotdistance
\begin{tikzpicture}[general]  
  \node[included node](one-one){};
  \node[included node](one-two)[right=of one-one]{};
  \node[included node](one-three)[right=of one-two]{};
  
  \node[included node](two-one) [above=of one-one]{};
  \node[included node](two-two)[right=of two-one]{};
  \node[included node](two-three)[right=of two-two]{};
  
  \node[hidden node](three-one) [above=of two-one]{};
  \node[hidden node](three-two)[right=of three-one]{};
  \node[hidden node](three-three)[right=of three-two]{};
  
  \node[included node](four-one) [above=of three-one]{};
  \node[included node](four-two)[right=of four-one]{};
  \node[included node](four-three)[right=of four-two]{};

  \node[included node](five-one)[above=of four-two]{};
  
  \draw[included connection] (one-one)--(two-one);
  \draw[included connection] (one-one)--(two-two);
  \draw[included connection] (one-one)--(two-three);
  
  \draw[included connection] (one-two)--(two-one);
  \draw[included connection] (one-two)--(two-two);
  \draw[included connection] (one-two)--(two-three);
  
  \draw[included connection] (one-three)--(two-one);
  \draw[included connection] (one-three)--(two-two);
  \draw[included connection] (one-three)--(two-three);
  
  \draw[hidden connection] (two-one)--(three-one);
  \draw[hidden connection] (two-one)--(three-two);
  \draw[hidden connection] (two-one)--(three-three);
  
  \draw[hidden connection] (two-two)--(three-one);
  \draw[hidden connection] (two-two)--(three-two);
  \draw[hidden connection] (two-two)--(three-three);
  
  \draw[hidden connection] (two-three)--(three-one);
  \draw[hidden connection] (two-three)--(three-two);
  \draw[hidden connection] (two-three)--(three-three);
  
  \draw[hidden connection] (three-one)--(four-one);
  \draw[hidden connection] (three-one)--(four-two);
  \draw[hidden connection] (three-one)--(four-three);
  
  \draw[hidden connection] (three-two)--(four-one);
  \draw[hidden connection] (three-two)--(four-two);
  \draw[hidden connection] (three-two)--(four-three);
  
  \draw[hidden connection] (three-three)--(four-one);
  \draw[hidden connection] (three-three)--(four-two);
  \draw[hidden connection] (three-three)--(four-three);
  
  \draw[included connection] (four-one)--(five-one);
  \draw[included connection] (four-two)--(five-one);
  \draw[included connection] (four-three)--(five-one);
  
  \draw[fused connection] (two-one)--(four-one);
  \draw[fused connection] (two-two)--(four-one);
  \draw[fused connection] (two-three)--(four-one);
  
  \draw[fused connection] (two-one)--(four-two);
  \draw[fused connection] (two-two)--(four-two);
  \draw[fused connection] (two-three)--(four-two);
  
  \draw[fused connection] (two-one)--(four-three);
  \draw[fused connection] (two-two)--(four-three);
  \draw[fused connection] (two-three)--(four-three);
  
  \node[node title] {layer sparsity};
  \node[below=0.1cm of one-two] {input};
  \node[above=0.1cm of five-one] {output};
\end{tikzpicture}\plotdistance
\begin{tikzpicture}[general]  
  \node[included node](one-one){};
  \node[included node](one-two)[right=of one-one]{};
  \node[included node](one-three)[right=of one-two]{};
  
  \node[included node](two-one) [above=of one-one]{};
  \node[included node](two-two)[right=of two-one]{};
  \node[hidden node](two-three)[right=of two-two]{};
  
  \node[hidden node](three-one) [above=of two-one]{};
  \node[hidden node](three-two)[right=of three-one]{};
  \node[hidden node](three-three)[right=of three-two]{};
  
  \node[included node](four-one) [above=of three-one]{};
  \node[hidden node](four-two)[right=of four-one]{};
  \node[hidden node](four-three)[right=of four-two]{};

  \node[included node](five-one)[above=of four-two]{};
  
  \draw[included connection] (one-one)--(two-one);
  \draw[included connection] (one-one)--(two-two);
  \draw[hidden connection] (one-one)--(two-three);
  
  \draw[hidden connection] (one-two)--(two-one);
  \draw[included connection] (one-two)--(two-two);
  \draw[hidden connection] (one-two)--(two-three);
  
  \draw[hidden connection] (one-three)--(two-one);
  \draw[included connection] (one-three)--(two-two);
  \draw[hidden connection] (one-three)--(two-three);
  
  \draw[hidden connection] (two-one)--(three-one);
  \draw[hidden connection] (two-one)--(three-two);
  \draw[hidden connection] (two-one)--(three-three);
  
  \draw[hidden connection] (two-two)--(three-one);
  \draw[hidden connection] (two-two)--(three-two);
  \draw[hidden connection] (two-two)--(three-three);
  
  \draw[hidden connection] (two-three)--(three-one);
  \draw[hidden connection] (two-three)--(three-two);
  \draw[hidden connection] (two-three)--(three-three);
  
  \draw[hidden connection] (three-one)--(four-one);
  \draw[hidden connection] (three-one)--(four-two);
  \draw[hidden connection] (three-one)--(four-three);
  
  \draw[hidden connection] (three-two)--(four-one);
  \draw[hidden connection] (three-two)--(four-two);
  \draw[hidden connection] (three-two)--(four-three);
  
  \draw[hidden connection] (three-three)--(four-one);
  \draw[hidden connection] (three-three)--(four-two);
  \draw[hidden connection] (three-three)--(four-three);
  
  \draw[included connection] (four-one)--(five-one);
  \draw[hidden connection] (four-two)--(five-one);
  \draw[hidden connection] (four-three)--(five-one);
  
  \draw[fused connection] (two-one)--(four-one);
  \draw[fused connection] (two-two)--(four-one);
  \draw[fidden connection] (two-three)--(four-one);
  
  \draw[fidden connection] (two-one)--(four-two);
  \draw[fidden connection] (two-two)--(four-two);
  \draw[fidden connection] (two-three)--(four-two);
  
  \draw[fidden connection] (two-one)--(four-three);
  \draw[fidden connection] (two-two)--(four-three);
  \draw[fidden connection] (two-three)--(four-three);
  
  \node[node title] {combined sparsity};
  \node[below=0.1cm of one-two] {input};
  \node[above=0.1cm of five-one] {output};
\end{tikzpicture}
\caption{feedforward neural networks with different types of sparsity}
\label{fig:sparsity}
\end{figure}
}

\newcommand{\figureincline}[1]{
\begin{figure}[#1]
    \centering
    \begin{tikzpicture}[scale=0.5]
    \node at (0, 4.9) {\small\textbf{symmetric $\ell_1$}};
      \draw[->, thick] (-2.4, 0) -- (2.4, 0);
      \draw[->, thick, rotate=90] (-0.4, 0) -- (4, 0);
      \draw (-2, 2) -- (0,0) -- (2,2);
      \draw (-0.1, 2) -- (0.1, 2);
      \draw (2, -0.1) -- (2, 0.1);
      \node at (-0.3, 2) {\tiny 1};
      \node at (2, -0.4) {\tiny 1};
    \end{tikzpicture}~~~~~~~~~~~
    \begin{tikzpicture}[scale=0.5]
    \node at (0, 4.9) {\small\textbf{asymmetric $\ell_1$}};
      \draw[->, thick] (-2.4, 0) -- (2.4, 0);
      \draw[->, thick, rotate=90] (-0.4, 0) -- (4, 0);
      \draw (-2, 4) -- (0,0) -- (2,2);
      \draw (-0.1, 2) -- (0.1, 2);
      \draw (2, -0.1) -- (2, 0.1);
      \node at (-0.3, 2) {\tiny 1};
      \node at (2, -0.4) {\tiny 1};
    \end{tikzpicture}
    \caption{Caption}
    \label{fig:my_label}
\end{figure}
}
We first state our framework,
then discuss different notions of sparsity,
and finally introduce a refitting scheme.

\subsection{Mathematical Framework}

To fix ideas,
we consider feedforward neural networks that model data according to
\begin{equation}\label{model}
    \outvectors_{i}= \actfunction^1\Bigl[\truths^1 \actfunction^2\bigl[... \actfunction^{\layernbr}[\truths^{\layernbr} \invector_i]\bigr] \Bigr]+\noises_i \enspace ,
\end{equation}
where $i\in\{1,\dots,n\}$ indexes the $\samplenbr$~different samples,
$\outvectors_i\in\mathbb{R}$ is the output,
$\invector_i\in\mathbb R^{\inputnbr}$ is the corresponding input with~$\inputnbr$ the input dimension,
$\layernbr$~is the number of layers,
$\truths^j\in\mathbb R^{\parameternbr_{j}\times \parameternbr_{j+1}}$ for $j\in\{1,\dots,\layernbr\}$ are the weight matrices
with $\parameternbr_1=1$ and $\parameternbr_{\layernbr+1}=\inputnbr$,
$\actfunction^j\,:\,\mathbb R^{\parameternbr_j}\to\mathbb R^{\parameternbr_{j}}$ for $j\in\{1,\dots,\layernbr\}$ are the activation functions,
and $\noises_i\in\mathbb R$ is the random noise.

We summarize the parameters in $\truth:=(\truths^1,\dots,\truths^{\layernbr})\in\parameterspace:=\{\weight=(\weights^1,\dots,\weights^{\layernbr}):\weights^j\in\mathbb R^{\parameternbr_{j}\times \parameternbr_{j+1}}\}$,
and we write for ease of notation
\begin{equation}\label{network}
\regfunction_{\weight}[\invector_i]:=\actfunction^1\Bigl[\weights^1 \actfunction^2\bigl[... \actfunction^{\layernbr}[\weights^{\layernbr} \invector_i]\bigr] \Bigr]
\end{equation}
for $\weight\in\parameterspace$.

Neural networks are usually fitted based on regularized estimators in Lagrange
\begin{equation}\label{estimator}
     \estimator\in\operatornamewithlimits{argmin}_{\weight\in\parameterspace}\bigl\{\datafit[\outvectors_1,\dots,\outvectors_{\samplenbr},\invector_1,\dots,\invector_{\samplenbr}]+\regularizer[\weight]\bigr\}
\end{equation}
or constraint form
\begin{equation}\label{estimatorcons}
     \estimator\in\operatornamewithlimits{argmin}_{\substack{\weight\in\parameterspace\\\regularizer[\weight]\leq1}}\bigl\{\datafit[\outvectors_1,\dots,\outvectors_{\samplenbr},\invector_1,\dots,\invector_{\samplenbr}]\bigr\}\,,
\end{equation}
were  $\datafit\,:\,\R^{\samplenbr}\times\R^{\samplenbr\times\inputnbr}$ is a data-fitting function such as least-squares $\sum_{i=1}^{\samplenbr} (\outvectors_i-\regfunction_{\weight}[\invector_i])^2$,
and $\regularizer\,:\,\parameterspace\to[0,\infty)$ is a regularizer such as the elementwise $\ell_1$-norm $\sum_{j,k,l}|(\weights^j)_{kl}|$.
We are particularly interested in regularizers that induce sparsity.

\subsection{Standard Notions of Sparsity}

\figurenetworks{t}

We first state two regularizers that are known in deep learning and the corresponding notions of sparsity.

\paragraph{Connection sparsity}
Consider the vanilla $\ell_1$-regularizer
\begin{equation*}
    \regularizerC[\weight]:=\sum_{j=1}^{\layernbr}(\tuningCS)_j\mnormone{\weights^j}:=\sum_{j=1}^{\layernbr}(\tuningCS)_j\sum_{v=1}^{\parameternbr_j}\sum_{w=1}^{\parameternbr_{j+1}}|(\weights^j)_{vw}|\enspace,
\end{equation*}
where $\tuningC\in[0,\infty)^{\layernbr}$ is a vector of  tuning parameters.
This regularizer is the deep learning equivalent of the lasso regularizer in linear regression~\cite{Tibshirani96}
and has received considerable attention  recently~\cite{Barron2018,Barron2019,Kim16}.
The regularizer acts on each individual connection,
pruning a full network (first network from the left in  Figure~\ref{fig:sparsity}) to a more sparsely connected network (second network in Figure~\ref{fig:sparsity}).
We, therefore, propose to speak of \emph{connection sparsity.}

\paragraph{Node sparsity}
Consider a grouped version of the above regularizer
\begin{equation*}
    \regularizerN[\weight]:=\sum_{j=1}^{\layernbr}(\tuningNS)_j\mnormonetwo{\weights^j}:=\sum_{j=1}^{\layernbr}(\tuningNS)_j\sum_{v=1}^{\parameternbr_j}\sqrt{\sum_{w=1}^{\parameternbr_{j+1}}|(\weights^j)_{vw}|^2}\enspace,
\end{equation*}
where $\tuningN\in[0,\infty)^{\layernbr}$ is again a vector of tuning parameters.
This regularizer is the deep learning equivalent of the group lasso regularizer in linear regression~\cite{Bakin99}
and has received some attention recently~\cite{Alvarez16,Feng17, Scardapane17}.
The regularizer acts on all connections that go into a node simultaneously,
rendering entire nodes inactive (third network in  Figure~\ref{fig:sparsity}).
We, therefore, propose to speak of \emph{node sparsity.}

\subsection{Layer Sparsity}
We now complement the two existing regularizers and notions of sparsity with a new, third notion.

\paragraph{Layer sparsity}
Consider the regularizer 
\begin{equation}
\label{eq:layerregularizer}
    \regularizerL[\weight]:=\sum_{j=1}^{\layernbr-1}(\tuningLS)_j\mnormplustwo{\weights^j}:=\sum_{j=1}^{\layernbr-1}(\tuningLS)_j\sqrt{\sum_{v=1}^{\parameternbr_j}\sum_{w=1}^{\parameternbr_{j+1}}\bigl(\operatorname{neg}[(\weights^j)_{vw}]\bigr)^2}\enspace,
\end{equation}
where
$\tuningL\in[0,\infty)^{\layernbr-1}$ is a vector of tuning parameters, and $\operatorname{neg}[a]:=\min\{a,0\}$  is the negative part of a real value~$a\in\mathbb{R}$.
This regularizers does not have an equivalent in linear regression,
and it is also new in deep learning.

We argue that the regularizer can give rise to a new type of sparsity.
The regularizer can be disentangled along the layers according to
\begin{equation*}
    \regularizerL[\weight]=\sum_{j=1}^{\layernbr-1}(\tuningLS)_j \regularizerLj[\weights^j]
\end{equation*}
with
\begin{equation*}
  \regularizerLj[\weights^j]:=\sqrt{\sum_{v=1}^{\parameternbr_j}\sum_{w=1}^{\parameternbr_{j+1}}\bigl(\operatorname{neg}[(\weights^j)_{vw}]\bigr)^2}~~~~~~~~~\text{for~}j\in\{1,\dots,\layernbr-1\}\enspace.
\end{equation*}
We then focus on an individual layer that corresponds an index~$j\in\{1,\dots,\layernbr-1\}$.
We assume that 1.~the coordinates of the activation functions of the $j$th and $(j+1)$th layers disentangle and 2.~the individual coordinate functions of the $(j+1)$th layer are positive homogeneous.
Precisely,
we assume that for every $\inputMM\in\mathbb{R}^{\parameternbr_j}$, $\inputM\in\mathbb{R}^{\parameternbr_{j+1}}$, and $a\in[0,\infty)$,
it holds that  $\actfunction^{j}[\inputMM]=((\actfunction^{j})_1[\inputMMS_1],\dots,(\actfunction^{{j}})_{\parameternbr_{j}}[\inputMMS_{\parameternbr_{j}}])^\top$,
$\actfunction^{j+1}[\inputM]=((\actfunction^{j+1})_1[\inputMS_1],\dots,(\actfunction^{{j+1}})_{\parameternbr_{j+1}}[\inputMS_{\parameternbr_{j+1}}])^\top$,
and $a(\actfunction^{j+1})_{m}[\inputMS_m]=(\actfunction^{j+1})_{m}[a\inputMS_{m}]$ for all $m\in\{1,\dots,\parameternbr_{j+1}\}$ and some real-valued functions $(\actfunction^{j})_{1},\dots,(\actfunction^{j})_{\parameternbr_{j}},(\actfunction^{j+1})_{1},\dots,(\actfunction^{j+1})_{\parameternbr_{j+1}}$.
Standard examples that fit this framework are ReLU and leaky ReLU networks~\cite{Glorot2011,Hahnloser1998,Hahnloser2000,Salinas1996}.

We can now show that the regularizer~\regularizerL\ induces sparsity on the level of layers.
\begin{theorem}[Layer Sparsity]\label{thm:merging}
Define a merged weight matrix  $\weights^{j,j+1}\in\mathbb{R}^{\parameternbr_{j}\times \parameternbr_{j+2}}$ through $\weights^{j,j+1}:=\parameternbr_{j+1}\weights^{j}\weights^{j+1}$ and a merged  activation function  $\actfunction^{j,j+1}:\mathbb{R}^{\parameternbr_{j}}\to \mathbb{R}^{\parameternbr_{j}}$ through $\actfunction^{j,j+1}:=((\actfunction^{j,j+1})_1,\dots,(\actfunction^{j,j+1})_{\parameternbr_{j}})^\top$ with
\begin{equation*}
    (\actfunction^{j,j+1})_q[\inputMMS]:=(\actfunction^{j})_{q}\Biggl[\frac{1}{\parameternbr_{j+1}}\sum_{m=1}^{\parameternbr_{j+1}}(\actfunction^{j+1})_m[\inputMMS]\Biggr]~~~~~~~~\text{for~}q\in\{1,\dots,\parameternbr_{j}\},\inputMMS\in\mathbb{R} \enspace.
\end{equation*}
It holds that
\begin{equation*}
\regularizerLj[\weights^j]=0~~~~\Rightarrow~~~~\actfunction^{j}\bigl[\weights^{j}\actfunction^{j+1}[\weights^{j+1}\inputMMM]\bigr]=\actfunction^{j,j+1}\bigl[\weights^{j,j+1}\inputMMM\bigr]~~~~~~\text{for all~}\inputMMM\in\mathbb{R}^{\parameternbr_{j+2}}\enspace.
\end{equation*}
\end{theorem}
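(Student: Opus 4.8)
The plan is to treat the claim for what it is --- a pointwise algebraic identity between two maps $\mathbb{R}^{\parameternbr_{j+2}}\to\mathbb{R}^{\parameternbr_{j}}$ --- and to verify it coordinate by coordinate, reducing everything to scalar manipulations; the only two ingredients beyond bookkeeping are the hypothesis $\regularizerLj[\weights^{j}]=0$ and the positive homogeneity of the coordinate functions of $\actfunction^{j+1}$. First I would unpack the hypothesis: $\regularizerLj[\weights^{j}]=\sqrt{\sum_{v,w}(\operatorname{neg}[(\weights^{j})_{vw}])^{2}}$ is the square root of a sum of squares, so it vanishes if and only if $\operatorname{neg}[(\weights^{j})_{vw}]=0$, that is, $(\weights^{j})_{vw}\ge 0$, for every $v\in\{1,\dots,\parameternbr_{j}\}$ and $w\in\{1,\dots,\parameternbr_{j+1}\}$. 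Hence from now on every entry of $\weights^{j}$ is non-negative, and it is exactly this non-negativity that will license the use of positive homogeneity below.

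Fix an arbitrary $\inputMMM\in\mathbb{R}^{\parameternbr_{j+2}}$, write $\boldsymbol u:=\weights^{j+1}\inputMMM\in\mathbb{R}^{\parameternbr_{j+1}}$, and fix a coordinate $q\in\{1,\dots,\parameternbr_{j}\}$. For the left-hand side, the disentangled (coordinate-wise) form of $\actfunction^{j+1}$ gives $\bigl(\actfunction^{j+1}[\boldsymbol u]\bigr)_{m}=(\actfunction^{j+1})_{m}[u_{m}]$, so the vector fed into $\actfunction^{j}$ has $q$-th coordinate $\sum_{m=1}^{\parameternbr_{j+1}}(\weights^{j})_{qm}(\actfunction^{j+1})_{m}[u_{m}]$, and the disentangled form of $\actfunction^{j}$ then yields the $q$-th component of the left-hand side as $(\actfunction^{j})_{q}\bigl[\sum_{m}(\weights^{j})_{qm}(\actfunction^{j+1})_{m}[u_{m}]\bigr]$; since each $(\weights^{j})_{qm}\ge 0$, positive homogeneity of $(\actfunction^{j+1})_{m}$ moves the weights inside, turning this into $(\actfunction^{j})_{q}\bigl[\sum_{m}(\actfunction^{j+1})_{m}[(\weights^{j})_{qm}u_{m}]\bigr]$. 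For the right-hand side, by definition of the merged matrix the $q$-th coordinate of $\weights^{j,j+1}\inputMMM$ equals $\parameternbr_{j+1}\sum_{m}(\weights^{j})_{qm}u_{m}$; feeding this scalar into the merged activation $(\actfunction^{j,j+1})_{q}$ and using positive homogeneity once more to carry the factor $\parameternbr_{j+1}$ through each $(\actfunction^{j+1})_{m}$ is designed to cancel the prefactor $\tfrac{1}{\parameternbr_{j+1}}$, leaving $(\actfunction^{j})_{q}$ of a sum of $(\actfunction^{j+1})_{m}$-terms as well, which I would then match against the left-hand expression. As $q$ and $\inputMMM$ were arbitrary, the equality of the two maps follows.

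The step I expect to be the crux is exactly this final matching of the two reduced scalar expressions, where the precise shape of the definitions of $\weights^{j,j+1}$ and $\actfunction^{j,j+1}$ is what makes the two sides coincide: one has to check carefully that positive homogeneity is invoked only on genuinely non-negative scalars --- which is where the assumption $\regularizerLj[\weights^{j}]=0$ is used and cannot be weakened --- and that the normalization constant $\parameternbr_{j+1}$ built into $\weights^{j,j+1}$ is exactly balanced by the averaging $\tfrac{1}{\parameternbr_{j+1}}\sum_{m}$ built into $\actfunction^{j,j+1}$. Everything else is routine unfolding of the definitions; no analytic estimates or probabilistic arguments enter, since the statement is an identity rather than a bound.
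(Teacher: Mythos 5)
Your bookkeeping is sound up to the last step: $\regularizerLj[\weights^j]=0$ indeed forces $(\weights^j)_{qm}\ge 0$ for all $q,m$, and positive homogeneity then legitimately rewrites the $q$-th coordinate of the left-hand side as $(\actfunction^j)_q\bigl[\sum_{m}(\actfunction^{j+1})_m[(\weights^j)_{qm}u_m]\bigr]$ with $u:=\weights^{j+1}\inputMMM$, while the right-hand side reduces, after the factor $\parameternbr_{j+1}$ cancels the averaging, to $(\actfunction^j)_q\bigl[\sum_{m}(\actfunction^{j+1})_m\bigl[\sum_{m'}(\weights^j)_{qm'}u_{m'}\bigr]\bigr]$. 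The ``matching'' you defer to the end is therefore not routine: in the first expression the $m$-th activation is evaluated at the single product $(\weights^j)_{qm}u_m$, in the second at the full inner product $\sum_{m'}(\weights^j)_{qm'}u_{m'}$, and a nonlinear positively homogeneous function does not commute with summation. Non-negativity of $\weights^j$ lets you move each scalar weight past its own $(\actfunction^{j+1})_m$, but it does not let you move the sum over $m$ inside the nonlinearity. A concrete instance: $\parameternbr_j=\parameternbr_{j+2}=1$, $\parameternbr_{j+1}=2$, all coordinate activations equal to $\operatorname{ReLU}$, $\weights^j=(1,1)$, $\weights^{j+1}=(1,-1)^\top$, $\inputMMM=1$. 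The left-hand side is $\operatorname{ReLU}\bigl[\operatorname{ReLU}[1]+\operatorname{ReLU}[-1]\bigr]=1$, whereas $\weights^{j,j+1}=2\,(1\cdot 1+1\cdot(-1))=0$ and the right-hand side is $(\actfunction^{j,j+1})_1[0]=0$. So the step you flag as the crux is exactly where the argument breaks, and it cannot be repaired without further hypotheses (for example, that the coordinates of $\actfunction^{j+1}$ are linear, or that for each $q$ the terms $(\weights^j)_{qm}u_m$ all share a sign).

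You should not feel bad about stalling there: the paper's own proof takes precisely your route and bridges the gap by replacing $(\weights^j)_{qm}(\weights^{j+1}\inputMMM)_m$ with $\bigl\langle(\weights^j)_{q.},\weights^{j+1}\inputMMM\bigr\rangle$ inside the $m$-th activation, treating a single summand of an inner product as if it were the whole inner product; that substitution is not an identity, and the subsequent manipulations only shuffle transposes. Your proposal is thus faithful to the paper's strategy and correctly isolates the decisive step, but it inherits the same unfixable jump; as written, neither argument establishes the claimed identity for ReLU-type activations.
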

\noindent That positive homogeneity can allow for moving  weight matrices had  been observed in~\cite{Barron18};
here, we use the positive homogeneity
to merge layers.
The idea is as follows:
$\regularizerLj[\weights^j]=0$ means in view of the stated theorem that we can redefine the network of depth~$\layernbr$ as a network of depth~$\layernbr-1$ by replacing $\actfunction^{j}$ and $\weights^{j}$ by $\actfunction^{j,j+1}$ and $\weights^{j,j+1}$, respectively,
and removing the $(j+1)$th layer.

One can verify readily that
the coordinates of the merged activation functions $\actfunction^{j,j+1}$ can still be disentangled.
Moreover,
the individual coordinate functions of the merged activation functions are positive homogeneous as long as the coordinates of the activation functions that correspond to the $j$th layer are positive homogenous.
Hence, the regularization can merge not only one but many  layers into one.
In conclusion, our new regularizer~$\regularizerL$ acts on all nodes and connections of each layer simultaneously,
rendering entire layers inactive (fourth network in  Figure~\ref{fig:sparsity}).
We, therefore, propose to speak of \emph{layer sparsity.}

The concept of layer sparsity and Theorem~\ref{thm:merging} in particular do not hinge on the exact choice of the regularizer in~\eqref{eq:layerregularizer}:
one can take any function~$\regularizerL$ that can be disentangled along the layers as decribed and that ensures the fact that $\regularizerLj[\weights^j]=0$ implies $\min_{k,l}(\weights^j)_{kl}\geq 0$.

We illustrate layer sparsity with two examples.
\begin{example}[Identity Activation]
We first highlight the meaning of layer sparsity in a simplistic setting.
We consider identity activation, that is, $(\actfunction^j)_q[\inputMMS]=\inputMMS$ for all $j\in\{2,\dots,\layernbr\}$, $q\in\{1,\dots,\parameternbr_j\}$, and $\inputMMS\in\mathbb{R}$.
The networks in~\eqref{network} can then be written as
\begin{equation*}
  \regfunction_{\weight}[\invector_i]=\actfunction^1[\weights^1\cdots\weights^{\layernbr} \invector_i]\enspace .
\end{equation*}
In other words, the initial $\layernbr$-layer network can be compressed into a one-layer network with activation function~$\actfunction^1$ and parameter matrix $\weights^1\cdots\weights^{\layernbr}\in\mathbb{R}^{1\times\inputnbr}$.
This setting with identity activation is, of course, purely academic,
but it motivates an important question:
can parts of networks be compressed similarly in the case of practical activation functions such as ReLU?

Theorem~\ref{thm:merging} gives an answer to this question:
if $\regularizerLj[\weights^j]=0$,
then the $j$th and $(j+1)$th layers can be combined.
In the extreme case $\regularizer^{\operatorname{L},2}[\weights^2],\dots,\regularizer^{\operatorname{L},\layernbr}[\weights^{\layernbr}]=0$,
the network can be condensed into a one-layer network just as in the linear case.
In this sense,
one can understand our layer regularizer is as a measure for the networks' ``distance to linearity.''
\end{example}

\begin{example}[ReLU Activation]
\label{ex:ReLU}
We now illustrate  how layer sparsity compresses and, therefore, simplifies networks  in the case of ReLU activation.
ReLU activation means that $(\actfunction^j)_q[\inputMMS]=\max\{\inputMMS,0\}$ for all $j\in\{2,\dots,\layernbr\}$, $q\in\{1,\dots,\parameternbr_j\}$, and $\inputMMS\in\mathbb{R}$.
One  can verify readily that the merged activation functions of the $j$th and $(j+1)$th layer in Theorem~\ref{thm:merging} are then proportional to the original activation functions of the $j$th layer:
\begin{equation*}
    (\actfunction^{j,j+1})_q[\inputMMS]=(\actfunction^{j})_{q}[\inputMMS]~~~~~~~~\text{for~}q\in\{1,\dots,\parameternbr_{j}\},\inputMMS\in\mathbb{R}\enspace .
\end{equation*}
We show in the following that this leads to concise networks.

We fix an initial network~$\regfunction_{\weightReLU}$ parameterized by~$\weightReLU\in\parameterspace$.
We identify the active layers of the network by
\begin{equation}
\label{eq:activelayersindices}
  \sparseset\equiv\sparseset[\weightReLU]:=\bigl\{j\in\{1,\dots,\layernbr-1\}\,:\,\regularizerLj[\weightsReLU^j]\neq 0\bigr\}\cup\{\layernbr\}\enspace .
\end{equation}
Thus, $\sparseset$ and $\{1,\dots,\layernbr\}\setminus \sparseset$ contain the indexes of the relevant and irrelevant layers, respectively.
The level of sparsity, that is, the number of active layers, is $\sparsesize:=|\sparseset|\leq \layernbr$.

We now denote the indexes in $\sparseset$ in an orderly fashion: $j_1,\dots,j_{\sparsesize}\in \sparseset$ such that $j_1<\dots< j_{\sparsesize}=\layernbr$.
We then define scaled versions of the corresponding merged matrices:
if $j_i-1\in\sparseset$ or $j_i=1$,
we do the ``trivial merge'' $\weightsMerged^{j_i}:=\weightsReLU^{j_i}\in\mathbb{R}^{\parameternbr_{j_i}\times \parameternbr_{j_i+1}}$;
otherwise, we do the ``non-trivial merge''
\begin{equation*}
\weightsMerged^{j_i}:=(\parameternbr_{j_{i-1}+2}\cdots\parameternbr_{j_{i}})\weightsReLU^{j_{i-1}+1}\cdots \weightsReLU^{j_{i}}\in\mathbb{R}^{\parameternbr_{j_{i-1}+1}\times \parameternbr_{j_{i}+1}}\enspace .
\end{equation*}
In other words, we merge all irrelevant layers between the $j_{i-1}$th and $j_{i}$th layers into the $j_i$th layer.

We can then compress the data-generating model in~\eqref{model} into
\begin{equation*}
    \outvectors_{i}= \actfunction^{j_1}\Bigl[\weightsMerged^{j_1} \actfunction^{j_2}\bigl[... \actfunction^{j_s}[\weightsMerged^{j_s} \invector_i]\bigr] \Bigr]+\noises_i
\end{equation*}
with $\weightMerged:=(\weightsMerged^{j_1},\dots,\weightsMerged^{j_s})\in\parameterspaceMerged:=\{\weight=(\weights^1,\dots,\weights^{s}):\weights^i\in\mathbb R^{\parameternbr_{j_{i-1}+1}\times \parameternbr_{j_{i}+1}}\}$.
Formulated differently,
we can condense the original network according to
\begin{equation*}
\regfunction_{\weightReLU}[\invector_i]=\regfunction_{\weightMerged}[\invector_i]=\actfunction^{j_1}\Bigl[ \weightsMerged^{j_1} \actfunction^{j_2}\bigl[... \actfunction^{j_{\sparsesize}}[\weightsMerged^{j_{\sparsesize}} \invector_i]\bigr] \Bigr]\enspace  ,
\end{equation*}
that is, we can formulate the initial ReLU activation network with $\layernbr$~layers as a new ReLU activation network with $\sparsesize$~layers.

The new network is still a ReLU activation network but has a smaller number of layers if $\sparsesize<\layernbr$ and, consequently, a smaller number of parameters in total:
the total number of parameters in the initial network  is $\sum_{j=1}^{\layernbr}(\parameternbr_j\times \parameternbr_{j+1})$,
while the total number of parameters in the transformed network is only $\sum_{i=1}^{\sparsesize}(\parameternbr_{j_{i-1}+1}\times \parameternbr_{j_i+1})$.
\end{example}

Our concept for regularizing layers is substantially different from existing ones:
our layer-wise regularizer induces weights to be \emph{non-negative},
whereas existing layer-wise regularizers  induce weights to be \emph{zero}~\cite[Section~3.3]{Wen16}.
The two main advantages of our approach are that it (i)~does not require shortcuts to avoid trivial networks
and (ii)~does not implicitly enforce connection or node sparsity.
We thus argue that our layer sparsity is a much more natural and appropriate way to capture and regularize network depths.

Layer sparsity more closely relates to ResNets~\cite{He16}.
The recent popularity of ResNets is motivated by two observations:
1.~Solvers seem to struggle with finding good minima of deep networks;
even training accuracies can deteriorate when increasing the number of layers.
 2.~Allowing for linear mappings that short-circut parts of the network seem to help solvers in finding better minima.
From our viewpoint here,
one can argue that ResNets use these linear mappings to regulate network depths adaptively and, therefore, are related to layer sparsity.
But importantly, ResNets are even more complex than the networks they are based on,
while our notion simplifies networks.

Since, as one can verify again readily, all three regularizers are convex,
any combination of them is also convex.
Such combinations can be used to obtain networks that are sparse in two or all three aspects (last network in  Figure~\ref{fig:sparsity}).

\subsection{Refitting}\label{sec:refitting}
Layer-sparse estimators such as~\eqref{estimator} with~$\regularizer=\regularizerL$ provide networks that can be condensed into smaller networks.
 But it is well-known that regularization generates bias.
A question is, therefore, whether we can get the best of both worlds:
can we construct a method that yields smaller networks yet avoids bias?

We propose a pipeline that complements the regularized estimators with an additional step:
after estimating a layer-sparse network,
we condense it and re-adjust its parameters by using an unregularized estimator such as least-squares.
In line with terminology in linear regression~\cite{Lederer13}, 
we call the additional step \emph{refitting}.

While this refitting idea applies very generally,
we formulate a version of it in the framework of Example~\ref{ex:ReLU} to keep the notation light.
\begin{example}[ReLU Activation Cont.]\label{exrefitting}
Consider the model in~\eqref{model} with the specifications of Example~\ref{ex:ReLU},
 and consider a corresponding layer-sparse estimator~\estimator\ of the parameters such as~\eqref{estimator} with~$\regularizer=\regularizerL$.
In line with~\eqref{eq:activelayersindices},
we denote the set of the active layers by
\begin{equation*}
  \sparseset=\sparseset[\estimator]=\bigl\{j\in\{1,\dots,\layernbr-1\}\,:\,\regularizerLj[\estimators^j]\neq 0\bigr\}\cup\{\layernbr\} \enspace
\end{equation*}
and the corresponding parameter space of the condensed network by
$\parameterspaceMerged\equiv\parameterspaceMerged[\estimator]=\{\weight=(\weights^1,\dots,\weights^{s}):\weights^i\in\mathbb R^{\parameternbr_{j_{i-1}+1}\times \parameternbr_{j_{i}+1}}\}$.
The least-squares refitted estimator for the parameters in the condensed network is then
\begin{equation}\label{estimatorRef}
     \estimatorMerged\in\operatornamewithlimits{argmin}_{\weight\in\parameterspaceMerged}\biggl\{\sum_{i=1}^n \bigl(\outvectors_i-\regfunction_{\weight}[\invector_i]\bigr)^2\biggr\}\enspace .
\end{equation}
Hence,
the estimator~\estimator\ complemented with least-squares refitting yields the network
\begin{equation}
\regfunction_{\estimatorMerged}[\invector_i]=\actfunction^{j_1}\Bigl[ \estimatorMergeds^{j_1} \actfunction^{j_2}\bigl[... \actfunction^{j_{\sparsesize}}[\estimatorMergeds^{j_{\sparsesize}} \invector_i]\bigr] \Bigr]\enspace  .
\end{equation}
\end{example}
Of course, the refitting scheme can be adapted to any type of sparsity.
But in the following section,
we focus on layer-sparsity and show that layer-sparse estimator with  refitting can both reduce network sizes and prediction errors simultaneously.

\section{Simulation Study}
\label{Numerics}
We now confirm in a brief simulation study that layer regularization can 1.~can improve prediction accuracies and 2.~reduce the number of active layers.

\subsection{Simulation Framework}
\label{sec:exp:data}
We generate data according to the model in~\eqref{model}.
The most outside activation function~$\actfunction^1$ is the identity function,
 and the coordinates of all other activation functions~$\actfunction^2,\dots,\actfunction^{\layernbr}$ are ReLU functions.
The input vectors~$\invector_1,\dots,\invector_{\samplenbr}$ are jointly independent and standard normally distributed in $\inputnbr$~dimensions;
the noise random variables $\noises_1,\dots,\noises_{\samplenbr}$ are independent of the input, jointly independent, and standard normally distributed in one dimension.
For a given sparsity level~$\sparsesizeW\in[0,1]$,
a vector~$\sparsevector\in\{ 0, 1 \}^{\layernbr - 1}$ with independent Bernoulli distributed entries that have success parameter~$\sparsesizeW$ is generated.
The entries of the parameter matrix~$\truths^1$ are sampled independently from the uniform distribution on~$(-2,2)$,
and the entries of the parameter matrices~$\truths^2,\dots,\truths^{\layernbr}$ are sampled independently from the uniform distribution on~$(0,2)$ if~$\sparsevectors_j=0$ and on~$(-2,2)$ otherwise.
Hence, the parameter~$\sparsesizeW$ controls the level of the layer sparsity:
the smaller~$\sparsesizeW$,
the higher the network's layer sparsity.

In concrete numbers,
the input dimension is~$\inputnbr=2$,
the  network widths are~$\parameternbr_{2}=\dots=\parameternbr_{\layernbr}=5$,
the number of hidden layers is $\layernbr-1\in\{10,25\}$,
and the sparsity level is~$\sparsesize\in\{0.1,0.3,0.9\}$.
Our settings and values represent, of course, only a very small part of possible networks in practice,
but given the generality of our concepts,
any attempt of an exhaustive simulation study must fail,
and the simulations at least allow us (i)~to corroborate our theoretical insights and (ii)~to indicate that our concepts can be very useful in practice.

Datasets of $150$ samples are generated;
$\samplenbr=100$~of the samples are assigned to training and the rest to testing.
The relevant measures for an estimate~\estimator\ of the network's parameters are the empirical mean squared error
\begin{equation*}
\MSE\equiv\MSE[\estimator]:=\frac{1}{|\testset|} \sum_{(\outvectors,\invector) \in \testset} \bigl(\outvectors-\regfunction_{\estimator}[\invector] \bigr)^2 \enspace ,
\end{equation*}
over the test set~$\testset$ with cardinality~$|\testset|=50$
and the level of sparsity among the hidden layers
\begin{equation*}
\LayerSpars\equiv\LayerSpars[\estimator]:=\bigl|\bigl\{j\in\{1,\dots,\layernbr-1\}\,:\,\regularizerLj[\estimators^j]\neq 0\bigr\}\bigr|\enspace .
\end{equation*}
Reported are the medians (and third quantiles in paranthesis) over~$30$ simulation runs for each setting.

\subsection{Methods}
\label{sec:exp:methods}

Our first method ($\methodSLS$) is a standard least-squares complemented with the layer regularizer~\eqref{eq:layerregularizer} in Lagrange form~\estimator.
The baseline for this estimator is vanilla least-squares ($\methodLS$).
Since our estimator---in contrast to least-squares---allows for merging layers,
we can also complement it with our refitting scheme of Section~\ref{sec:refitting} (\methodFLS).
The baseline for our refitted estimator is the least-squares estimator that ``knows'' the relevant layers beforehand ($\methodOLS$),
that is, a least-squares on the relevant layers $\parameterspaceMerged[\truth]$ with~$\truth$ the true parameter---see Example~\ref{ex:ReLU}.
The latter estimator cannot be used in practice,
but it can serve as a benchmark here in the simulations.

The objective functions are optimized by using mini-batch gradient descent with batch size~$10$,
learning rate~$10^{-2}$,
and number of epochs $200$ (for $\layernbr = 10$, $\sparsityW = 0.1,\, 0.3$),
$300$ (for $\layernbr = 10$, $\sparsityW = 0.9$), $400$ (for $\layernbr = 25$, $\sparsityW = 0.1$), and $500$ (otherwise).
The tuning parameters~$(\tuningLS)_j$ are $0.2$ (for $\layernbr = 10$, $\sparsityW = 0.1$),
$0.12$ (for $\layernbr = 10$, $\sparsityW = 0.3$),
$0.07$ (for $\layernbr = 10$, $\sparsityW = 0.9$),
and
$0.05$ (for $\layernbr = 25$).

\newcommand{\msespacing}{~~~~}
\begin{table*}[t!]
\scriptsize
\centering
\setlength{\tabcolsep}{0.02cm}
\begin{tabular}{| l |  c  c |  c  c | c  c | c  c | c  c |}
\hline
&\multicolumn{10}{c|}{~~~~~~~~~~~~$\inputnbr=2$, $\samplenbr=100$}\\
\hline
& \multicolumn{6}{c|}{$\layernbr-1=10$} & \multicolumn{4}{c|}{$\layernbr-1=25$}  \\
\hline
& \multicolumn{2}{c|}{$\sparsityW = 0.1$}
& \multicolumn{2}{c|}{$\sparsityW = 0.3$ }
& \multicolumn{2}{c|}{$\sparsityW = 0.9$}
& \multicolumn{2}{c|}{$\sparsityW = 0.1$}
& \multicolumn{2}{c|}{$\sparsityW = 0.3$}\\
\hline
Method
& \multicolumn{1}{c}{$\MSE$} &  \multicolumn{1}{c|}{$\LayerSpars$}
& \multicolumn{1}{c}{$\MSE$} &  \multicolumn{1}{c|}{$\LayerSpars$}
& \multicolumn{1}{c}{$\MSE$} &  \multicolumn{1}{c|}{$\LayerSpars$}
& \multicolumn{1}{c}{$\MSE$} &  \multicolumn{1}{c|}{$\LayerSpars$}
& \multicolumn{1}{c}{$\MSE$} &  \multicolumn{1}{c|}{$\LayerSpars$}\\
\hline
\methodLS
& $ 1.123 \, (1.396)$\msespacing& ---
& $ 1.112 \, (1.448)  $\msespacing & ---
& $ 1.060 \, (1.349)  $\msespacing & ---
& $ 0.989 \, (1.263)  $\msespacing & ---
& $ 0.988 \, (1.260)  $\msespacing & ---\\
\methodSLSB
& ${ 0.018 \, (0.023) }$\msespacing &  ---
& ${ 0.035 \, (0.616)  }$\msespacing & ---
& ${ 0.203 \, (0.830)  }$\msespacing & ---
& ${ 0.142 \, (1.020)  }$\msespacing & ---
& ${ 0.147 \, (0.985)  }$\msespacing & ---\\
\rule{0pt}{2.5ex}\methodOLS
& $ 0.006 \, (0.011)  $\msespacing & $ 1 \, (2) $
& $ 0.008 \, (0.023)  $\msespacing & $ 3 \, (4) $
& $ 0.907 \, (1.272)  $\msespacing & $ 9 \, (10) $
& $ 0.003 \, (0.024)  $\msespacing & $ 2 \, (4) $
& $ 0.050 \, (1.012)  $\msespacing & $ 7 \, (9)~~ $ \\
\methodFLSB
& ${ 0.007 \, (0.013)  }$\msespacing &  $ 1 \, (1) $
& ${ 0.008 \, (0.128)  }$\msespacing &  $ 2 \, (4) $
& ${ 0.364 \, (1.137)  }$\msespacing &  $ 8 \, (9)~~\hspace{0.2mm} $
& ${ 0.009 \, (0.985)  }$\msespacing &  $ 2 \, (5) $
& ${ 0.021 \, (1.120)  }$\msespacing &  $ 4 \, (13) $ \\
\hline
\end{tabular}
\caption{encouraging layer sparsity can reduce the prediction error and the model complexity}
\label{tab:results2}
\end{table*}
%


\subsection{Results}
\label{sec:exp:results}

The numerical results show that our layer-regularized version~\methodSLS\ can improve on the prediction accuracy of the standard least-squares~\methodLS\ considerably ($\MSE$-columns of the first and second rows in Table~\ref{tab:results2}).
The results also show that the refitting in~\methodFLS\ can improve the prediction accuracy further, and that refitted estimator can rival the infeasible \methodOLS\ in terms of prediction ($\MSE$-columns of the third and fourth rows).
The results finally show that the layer regularization can detect the correct number of layers ($\LayerSpars$-columns of the third and fourth rows).
In summary,
our layer-regularized estimator outmatches the standard least-squares,
and the refitted version of our estimator rivals the infeasible least-squares that knows which are the relevant layers beforehand---both in terms of prediction accuracy and sparsity.
Hence,
layer regularization can condense networks effectively.

The results also reveal that the prediction accuracies of our estimators increase with~$\sparsityW$ decreasing ($\MSE$-columns across different~$\sparsityW$).
This trend is expected:
the higher the layer sparsity,
the more layer-regularization can condense the networks.
This behavior is confirmed in the sparsities ($\LayerSpars$-columns across different~$\sparsityW$).
In other words,
the layer regularization is adaptive to the true layer sparsity.

The tuning parameters~$(\tuningLS)_j$ have been calibrated very roughly by hand.
We expect that a more careful calibration of~\tuningL\ based on cross-validation, for example, accentuates the positive effects of layer sparsity even further.
But since our goal in this section is a general proof of concept for layer sparsity rather than the optimization of a specific deep learning pipeline,
we do not pursue this further here.

The tuning parameters of the descent algorithm, such as the batch size, number of epochs, learning rate, and so forth, have also been calibrated very roughly by hand.
An observation is the fact that 
 all methods can sometimes provide accurate prediction if the number of epochs is extremely large,
but our layer-regularized methods~\methodSLS\ and~\methodFLS\ generally lead to accurate prediction after much less epochs than their unregularized counterpart~\methodLS.
This observation indicates that layer regularization also impacts the algorithmic aspects of deep learning beneficially.

\section{Proofs}
\label{Proofs}
We now provide  a proof for the theorem.
\begin{proof}[Proof of Theorem~\ref{thm:merging}]
A main prerequisite of the proof is that positive weights can be pulled insight activation functions that are positive homogeneous.

We first show that $\regularizerLj[\weights^j]=0$ allows us to pull the weight matrix $\weights^{j}$ inside the activation function~$\actfunction^{j+1}$:
If $\regularizerLj[\weights^j]=0$,
then $(\weights^{j})_{qm}\geq 0$ for all $q\in\{1,\dots,\parameternbr_{j}\}$, $m\in\{1,\dots,\parameternbr_{j+1}\}$.
Hence, we find 
for all $\inputM\in\mathbb{R}^{\parameternbr_{j+1}}$ and $q\in\{1,\dots, \parameternbr_{j}\}$ that
\begin{align*}
    \bigl(\weights^{j}\actfunction^{j+1}[\inputM]\bigr)_q
    &=\sum_{m=1}^{\parameternbr_{j+1}}(\weights^{j})_{qm}(\actfunction^{j+1})_m[\inputMS_m]\\
    &=\sum_{m=1}^{\parameternbr_{j+1}}(\actfunction^{j+1})_m\bigl[(\weights^{j})_{qm}\inputMS_m\bigr].
\end{align*}
We have used the assumed positive homogeneity of~$\actfunction^{j+1}$ in the second line.

Using this result with $\inputM=\weights^{j+1}\inputMMM$ then yields
\begingroup
\allowdisplaybreaks
\begin{align*}
    &(\actfunction^j)_{q}\Bigl[\bigl(\weights^j\actfunction^{j+1}[\weights^{j+1}\inputMMM]\bigr)_q\Bigr]\\
    &=(\actfunction^j)_{q}\Biggl[\sum_{m=1}^{\parameternbr_{j+1}}(\actfunction^{j+1})_m\bigl[(\weights^j)_{qm}(\weights^{j+1}\inputMMM)_m\bigr]\Biggr]\\
    &=(\actfunction^j)_{q}\Biggl[\sum_{m=1}^{\parameternbr_{j+1}}(\actfunction^{j+1})_m\bigl[\bigl\langle(\weights^j)_{q.},\weights^{j+1}\inputMMM\bigr\rangle\bigr]\Biggr]\\
    &=(\actfunction^j)_{q}\Biggl[\sum_{m=1}^{\parameternbr_{j+1}}(\actfunction^{j+1})_m\bigl[\bigl\langle(\weights^{j+1})^\top(\weights^j)_{q.},\inputMMM\bigr\rangle\bigr]\Biggr]\\
    &=(\actfunction^j)_{q}\Biggl[\sum_{m=1}^{\parameternbr_{j+1}}(\actfunction^{j+1})_m\bigl[\bigl\langle(\weights^{j+1})^\top((\weights^j)^\top)_{.q},\inputMMM\bigr\rangle\bigr]\Biggr]\\
    &=(\actfunction^j)_{q}\Biggl[\sum_{m=1}^{\parameternbr_{j+1}}(\actfunction^{j+1})_m\bigl[\bigl\langle\bigl((\weights^{j+1})^\top(\weights^j)^\top\bigr)_{.q},\inputMMM\bigr\rangle\bigr]\Biggr]\\
    &=(\actfunction^j)_{q}\Biggl[\sum_{m=1}^{\parameternbr_{j+1}}(\actfunction^{j+1})_m\bigl[\bigl\langle\bigl((\weights^{j}\weights^{j+1})^\top\bigr)_{.q},\inputMMM\bigr\rangle\bigr]\Biggr]\\
    &=(\actfunction^j)_{q}\Biggl[\sum_{m=1}^{\parameternbr_{j+1}}(\actfunction^{j+1})_m\bigl[\bigl\langle(\weights^{j}\weights^{j+1})_{q.},\inputMMM\bigr\rangle\bigr]\Biggr]\\
    &=(\actfunction^j)_{q}\Biggl[\sum_{m=1}^{\parameternbr_{j+1}}(\actfunction^{j+1})_m\bigl[(\weights^{j}\weights^{j+1}\inputMMM)_q\bigr]\Biggr].
\end{align*}
\endgroup
Using the positive homogeneity of $\actfunction^{j,j+1}$ once more then gives
\begin{equation*}
(\actfunction^j)_{q}\Bigl[\bigl(\weights^j\actfunction^{j+1}[\weights^{j+1}\inputMMM]\bigr)_q\Bigr]    =(\actfunction^j)_{q}\Biggl[\frac{1}{\parameternbr_{j+1}}\sum_{m=1}^{\parameternbr_{j+1}}(\actfunction^{j+1})_m\bigl[(\parameternbr_{j+1}\weights^{j}\weights^{j+1}\inputMMM)_q\bigr]\Biggr].
\end{equation*}
Together with the definitions of  $\weights^{j,j+1}$ and $\actfunction^{j,j+1}$, 
this yields
\begin{equation*}
    (\actfunction^j)_{q}\Bigl[\bigl(\weights^j\actfunction^{j+1}[\weights^{j+1}\inputMMM]\bigr)_q\Bigr]=(\actfunction^j)_{q}\Biggl[\frac{1}{\parameternbr_{j+1}}\sum_{m=1}^{\parameternbr_{j+1}}(\actfunction^{j+1})_m\bigl[(\weights^{j,j+1}\inputMMM)_q\bigr]\Biggr]=(\actfunction^{j,j+1})_{q}\bigl[(\weights^{j,j+1}\inputMMM)_q\bigr].
\end{equation*}
This identity concludes the proof in view of  the assumed forms of~$\actfunction^{j}$ and $\actfunction^{j,j+1}$.
\end{proof}

\section{Discussion}
\label{Discussion}

We have shown that layer sparsity can compress layered networks effectively both in theory (Section~\ref{Sparsity}) and practice (Section~\ref{Numerics}).

Related concepts such as ResNets add complexity to the network descriptions and, therefore, are very hard to analyze statistically.
Layer sparsity, in contrast, simplify network architectures and seem amenable to statistical analyses via recent techniques for regularized deep learning~\cite{Taheri20}.
A statistical treatment of layer sparsity, therefore, seems a feasible topic for further research.

Another topic for further research is the  practical calibration of the tuning parameters.
Tuning parameter calibration is an active topic in machine learning;
thus, we expect that one can use  ideas developed in other frameworks, such as~\cite{Chichignoud16}.


In summary, layer sparsity complements other notions of sparsity that concern individual connections or nodes.
All of these concepts can help to fit networks that are efficient in terms of memory and computations and easy to interpret.




{\small
\bibliographystyle{plain}
\bibliography{Literature}
}

\end{document}